\newtheorem{theorem}{Theorem}
\newtheorem{lemma}{Lemma}[section]
\newtheorem{cor}{Corollary}[section]
\newtheorem{remark}{Remark}[section]
\def\one{{\mathbf{1}}}
\def\I{{\mathbb{I}}}
\def\B{{\mathbb{B}}}
\def\R{{\mathbb{R}}}
\def\N{{\mathbb{N}}}
\def\L{{\mathbb{L}}}
\def\O{{\mathbb{O}}}
\def\E{{\mathcal{E}}}
\def\F{{\mathcal{F}}}
\def\D{{\mathcal{D}}}
\def\H{{\mathcal{H}}}
\def\U{{\mathcal{U}}}
\def\ND{{\mathcal{N}}}
\def\DD{{\mathcal{D}}}
\def\Loss{{\hbox{\rm{Loss}}}}
\def\Dis{{\hbox{\rm{Dis}}}}
\begin{document}

\title{\bf A Robust Classification-autoencoder\\ to Defend Outliers
and Adversaries\thanks{This work is partially supported by the NKRDP grants No.2018YFA0704705, No.2018YFA0306702 and the NSFC grant No.12288201.
}}
 \author{Lijia Yu and Xiao-Shan Gao\\
 Academy of Mathematics and Systems Science, Chinese Academy of Sciences\\
 University of  Chinese Academy of Sciences\\
 Email: xgao@mmrc.iss.ac.cn}
 \date{ }

\maketitle

\begin{abstract}
\noindent
In this paper, a robust classification-autoencoder (CAE) is proposed, which has strong ability to recognize outliers and defend adversaries.  The main idea is to change the autoencoder from an unsupervised learning model into a classifier, where the encoder is used to compress samples with different labels into disjoint compression spaces and the decoder is used to recover samples from their compression spaces. The encoder is used both as a compressed feature learner and as a classifier, and the decoder is used to decide whether the classification given by the encoder is correct by comparing the input sample with the output. Since adversary samples are seemingly inevitable for the current DNN framework, the list classifier to defend adversaries is introduced based on CAE, which outputs several labels and the corresponding samples recovered by the CAE. Extensive experimental results are used to show that the CAE achieves state of the art to recognize outliers by finding almost all outliers; the list classifier gives near lossless classification in the sense that the output list contains the correct label for almost all adversaries and the size of the output list is reasonably small.

\vskip10pt\noindent
{\bf Keywords.} Robust DNN, classification-autoencoder, list classifier,
decouple classification, outlier, adversary sample.
\end{abstract}

\section{Introduction}
The deep neural network (DNN) \cite{lecun2015deep} has become the most powerful machine learning method, which has been successfully applied in computer vision, natural language processing,
autonomous driving, and many other fields.
On the other hands, the DNN still has weaknesses for improvements, such as the
lack of explainability and robustness~\cite{choi}.

Robustness is a key desired feature for DNNs.
In general, a DNN is said to be robust, if it can not only correctly classify samples containing noises, but also has the ability to recognize outliers and to defend adversaries~\cite{sur1,sur2,outlier1,rubust-rev1}.

Outlier detection is a key issue in the open-world classification~\cite{rubust-rev1,outlier2}, where the inputs to the DNN are not necessarily satisfy the same distribution with the training data set.
On the contrary, the objects to be classified usually consist of
a low-dimensional subspace of the total input space to the DNN and the majorities
of the inputs are outliers.
To be more precise, let us consider a classification DNN $\F:\I^n\rightarrow \L$
for certain object $\O\subset\I^n$,  where $\I = [0,1]$ and $\L=\{0, 1, \ldots, o\}$ is the label set.
For the MNIST dataset, $\O$ is the hand-written numbers represented by images in $\I^{28\times }$,
$\L=\{0,1,\ldots,9\}$, and $n=784$.
In general, $\O$ is considered to be a very low-dimensional subset of $\I^n$
and hence the majorities of elements in $\I^n$ are not in $\O$,
which are called {\em outliers}.
However, for each element $x\in\I^n$, a trained  $\F$  will give a label in $\L$ to $x$,
which is wrong with high probability if
$\F$ is not specifically designed and trained to defend outliers.

A more subtle and difficult problem related to the robustness of DNN is the existence of adversaries~\cite{Biggio1,S2013}, that is, it is possible to intentionally make little modification to an image in $\O$ such that human can still recognize the object clearly, but the DNN outputs a wrong label or even any label given by the adversary.
Existence of adversary samples makes the DNN vulnerable in safety-critical applications.
%
%
Although many effective methods for training DNN to defend adversaries were proposed~\cite{sur1,sur2}, it was shown that adversaries seem still inevitable for   current DNNs~\cite{asulay1,Bast1,adv-inev1}.

There exist vast literatures on improving the robustness of DNNs~\cite{sur1,sur2,outlier1,rubust-rev1}.
In this paper, we present a new approach by changing the autoencoder from an un-supervised learning model into  a classifier.

\subsection{Contribution}

In this paper, we present a DNN which has strong ability to recognize outliers
and defend adversaries.
The basic idea  is to change the autoencoder from an unsupervised learning model into a classification network.
The encoder $\E$ of the autoencoder is used to compress the input images into a low-dimensional space $\R^m$ ($m\ll n$) such that images with the same label are compressed into  the {\em compression space} of that label
and images with different labels are compressed into disjoint subsets of $\R^m$, called compression spaces.
Furthermore, the images in $\O$ can be approximately recovered by the  decoder $\D$ from their compression spaces.
The encoder $\E$ is used both as a feature learner and a coarse classifier,
which is different from the usual classifiers in that several coordinates instead of one
are used to classify as well as to represent the images for each label.
The decoder $\D$ can be used to give the final classification
by comparing the input image with the output.

The above network is called a {\em classification-autoencoder} (CAE).
We prove that such a network exists in certain sense.
Precisely, we prove that there exists an autoencoder
such that the encoder compresses images with different labels
into disjoint compression sets of   $\R^m$ for any $m$
and the decoder  approximately recovers the input
image from its compression set with any given precision.

The CAE is evaluated in great detail using numerical experiments.
It is shown that the CAE achieves state of the art to recognize outliers by finding almost all outliers robustly.
As an autoencoder, the outputs of a CAE are always like
the object $\O$ to be classified. By definition, an outlier is an image which is not
considered to be an element of $\O$, so an image is treated as
an outlier if the input and the output are different.

The CAE also works well for adversaries in the following sense.
For a large proportion of adversaries of the encoder-classifier,
the CAE can recognize them as problem images, that is, they are outliers or adversaries.
Since adversaries are seemly inevitable for DNNs~\cite{asulay1,adv-inev1},
a possible way to alleviate the problem is to give several answers instead of one.
In a CAE, we can apply the decoder $\D$ to the compression space of each
label to recover images and output the labels and the corresponding images
which are similar to the input image.
This kind of classifier is called {\em list classifier} and the output is a {\em classification list}, which tries to give uncertain but lossless classifications.

Our experimental results show that for almost all adversaries,
the classification list contains the correct label and the  size of the output list is reasonably small.
A lossless classification  does not miss important information, which
is important for safety-critical applications. Furthermore, the classification list
can be used for further analysis.
For instance, the LCAE can  be used to do {\em decouple classification},
which means to recover one or more elements from a sample containing
more than two well-mixed elements of $\O$.

\subsection{Related work}

The autoencoder is one of the most important neural networks for unsupervised
learning~\cite{ae1}, which has many improvements and applications.
The autoencoder learns compressed features in a low-dimensional space for
high-dimensional data with minimum reconstruction loss, while our CAE makes classification
at the same time of learning features.
It is natural to use autoencoders for outlier detection due to its reconstruction property~\cite{ae3}.
We improve this in two aspects. First, by compressing images with different labels
into disjoint compression spaces, the robustness is increased and the classification can be given.
Second, we use the compression spaces to introduce the list classification to increase the robustness.
The ladder network, which is a variant of autoencoder,
was used to classify the input images~\cite{ladder}.
Our work is different in two aspects.
First, a different DNN structure is introduced in this paper
and thus the loss functions are different.
Second, the work in \cite{ladder} was mainly focused on denoising
and our work is mainly for defending adversaries and outliers.
The robustness of autoencoder was studied~\cite{ae2,ae3,ae4}.
In principle, these methods can be applied to our CAE model to further
improve the robustness.

A simple approach to recognize outliers is to introduce a new label
representing outliers and add outlier samples~\cite{rubust-rev1}.
The difficulty with this approach is that  the distribution of outliers is usually
too complex to model in high dimensional spaces.
As a consequence, a network based on this approach works well for those outliers similar to that in the training set and works poorly for other outliers, as shown by the experimental results in this paper.
Many approaches were proposed to detect outliers~\cite{outlier2,outlier1,rubust-rev1}.
On the other hand, the CAE proposed in this paper is more natural
to detect outliers, because the output of the CAE are assumed to be
similar to the objects to be classified.

Many methods were proposed to train more robust DNNs to defend adversaries~\cite{sur-adv}.
In the adversary training method proposed by Madry et al,
the value of the loss function at the worst adversary in a small neighborhood of
the training sample is minimized~\cite{M2017}.
This approach can reduce the adversaries significantly~\cite{M2017,YOPO}.
A similar approach is to generate adversaries and add them to the training set~\cite{G2014}.
Another major approach is the gradient obfuscation methods,
which deliberately hide or randomize the gradient information
of the model, so that gradient based attacks cannot be used directly~\cite{obfu1,obfu2,obfu3,obfu4,obfu5,bias}.
The ensembler adversarial training~\cite{adv-li2} was introduced
for  CNN models, which can apply to large datasets
such as ImageNet.
A fast adversarial training algorithm was proposed,
which improves the training efficiency by reusing the backward pass calculations~\cite{adv-li3}.
A less direct approach to enhance the ability for
the network to resist adversaries is to make the DNN more stable
by introducing the Lipschitz constant or $L_{2,\infty}$ regulations of each layer~\cite{adv-li1,S2013,L2i}.

Effective methods were proposed to train more robust DNNs to defend adversaries~\cite{sur-adv}.
In the adversary training method proposed by Madry et al,
the loss function is a min-max optimization problem
such that the value of the loss function of the worst adversary in a small neighborhood of
the training sample is minimized~\cite{M2017}.
This approach can reduce the adversaries significantly~\cite{M2017,YOPO}.
Another similar approach is to generate adversaries and
add them to the training set~\cite{G2014}.
The ensembler adversarial training~\cite{adv-li2} was introduced
for  CNN models, which can apply to large datasets
such as ImageNet.
A fast adversarial training algorithm was proposed,
which improves the training efficiency by reusing the backward pass calculations~\cite{adv-li3}.
A less direct approach to enhance the ability for
the network to resist adversaries is to make the DNN more stable
by introducing the Lipschitz constant or $L_{2,\infty}$ regulations of each layer~\cite{adv-li1,S2013,L2i}.

Increase the robustness of the network in general will increase its ability
to defend adversaries and there exist quite a lot of work on
robust DNNs~\cite{H2015,rubust-rev1}.
Adding noises to the training data is an effective way to increase
the robustness~\cite[Sec.7.5]{DL}.
The $L_{1}$ regulation and $L_{1,\infty}$ normalization are used to increase the robustness of DNNs~\cite{Y2018}.
Knowledge distilling is also used to enhance  robustness and defend adversarial examples~\cite{H2015}.
In \cite{rob-b3,W2019,rob-b1,rob-b2}, methods to compute the robust regions of a DNN were given.
%


\vskip10pt
The rest of this paper is divided into four parts.
In section \ref{sec-s}, we  give the structure for the CAE and prove its existence.
In section \ref{sec-c}, we  give experimental results
and show that the CAE can recognize almost all outliers.
In section \ref{sec-lcae}, we give the list classification algorithm and the experimental results for it to defend adversaries.
In section \ref{sec-conc}, conclusions are given.

\section{Structure and existence of the classification-autoencoder}
\label{sec-s}

\subsection{The main idea}

Let $\I = [0,1]$ and $\F$ an autoencoder
\begin{eqnarray}\label{eq-F1}
\F=\D\circ\E &:& \I^n\rightarrow \I^n
\end{eqnarray}
with   encoder $\E: \I^n\rightarrow \R^m$ and   decoder $\D: \R^m\rightarrow \I^n$
($m\ll n$).
$\F$  is called a {\em classification-autoencoder (CAE)} for a classification problem,
if $\E$ compresses an input sample with different labels into disjoint {\em compression spaces} in $\R^m$ and  $\D$ recovers the input images to any given precision from their compression spaces.
%

\begin{figure}[H]
\centering
\includegraphics[scale=0.35]{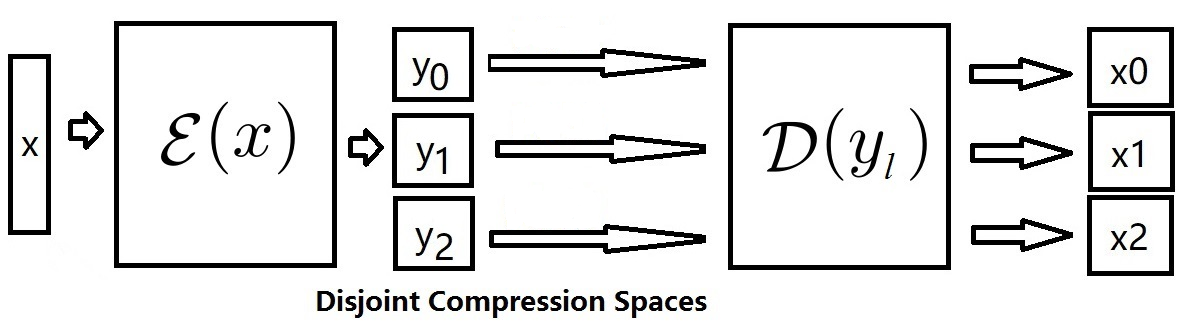}
\caption{CAE: The input $x$ is mapped into $y_l$ in disjoint compression spaces by the encoder $\E$, where $l$
is the label of $x$ given by $\E$.
The decoder $\D$ recovers $x_l$ from $y_l$, which is approximately the same as $x$ if $l$ is the correct label of $x$.}
\label{fig-cae}
\end{figure}

As shown in Figure \ref{fig-cae}, for an input image $x$,
let $y_l$ be the projection of $\E(x)$ to the compression space of label $l$.
Then the encoder gives a potential label $l$ to $x$ if $y_l$ has larger weight than other $y_i$ for $i\ne l$. Furthermore, if $\F(x)$ is very similar to $x$, then we believe that $x$ is a normal sample with label $l$.
On the contrary, if $\F(x)$ not similar to $x$, then $x$ is either an outlier or an adversary.
In this way, the CAE can be used as an open-world classifier by doing
classification for normal samples as well as recognizing outliers and adversaries.

\subsection{Structure and training of the classification-autoencoder}

\subsubsection{The classification-encoder $\E$}
The encoder $\E$ will be used both as a compressor as well as a classifier.
Let $\L=\{0, 1, \ldots, o\}\subset\N$ be
the set of labels and $m=(o+1)m_0$,
where $m_0\in\N_{+}$ is a hyperparameter to be defined by the user.
We try to use $\E$ to compress an image with label $l$ to
the {\em compression space} of  label $l$,
which consists of the $k$-th coordinates of $\R^m$ for $k=lm_0+1, \ldots, lm_0+m_0$.
%
%
For $l\in\L$, define the {\em compression mask vectors} $M_l\in\R^m$ for the $l$-th compression space as follows
\begin{equation}
\label{eq-M}
 M_l[j]=
 \left\{
 \begin{array}{ll}
        1,  & \text{when } lm_0+1\le j \le lm_0+m_0\\
        0,  & \text{otherwise.}
  \end{array}
 \right.
\end{equation}
For $y\in \R^m$ and $l\in\L$,  the {\em projection weight} of $y$ to the $l$-th compression space is
\begin{equation}
\label{eq-W}
 W_l(y)=y\cdot M_l
\end{equation}
where $\cdot$ is the inner product.
%

Let $x\in\I^n$ be a sample in the training set and $l_x\in\L$ its label.
In order to send $x$ to its compression space,  we define the following loss function for $\E$
\begin{equation}
\label{eq-L1}
\Loss_{\E}(x, l_x)=\Loss_{\rm CE}(A(x)/\gamma, l_x)
\end{equation}
where $A(x)=\{W_0(\E(x)),$ $\ldots,$ $ W_o(\E(x))\}$ and  $\gamma$ is a hyperparameter.
Intuitively, the loss function means that the weight of $\E(x)$ in the compression
space for label $l_x$ will be maximized.

%

\subsubsection{The classification-autoencoder}
The decoder $\D$ in \eqref{eq-F1} tries to recover $x$ from its compression subspace for an $x\in\I^n$.
We first define a projection function
$$p(y, l)=y\circ M_l:\R^m\times\L\to \R^m$$
where $\circ$ is the Hadamard (element-wise) product
and $M_l$ is the mask vector defined in \eqref{eq-M}.
Note that $p(y, l)$ projects $y$ to the compression space of label $l$.

Then the total network $\F$ is certain composition of $\E$ and $\D$:
\begin{equation}\label{eq-F2}
\F(x) = \D(p(\E(x), l_x)) = \D(\E(x)\circ M_{l_x})
\end{equation}
and the loss function for $\F$ is
\begin{equation}\label{eq-loss}
\Loss(x, l_x)=\Loss_{\E}(x, l_x)+\lambda \lambda\Loss_{\rm{MSE}}(\F(x), x)
\end{equation}
where  $\lambda$ is a hyperparameter.

Moreover, to increase the robustness of the CAE, we can use adversarial training~\cite{M2017}
and the loss function for $\F$ becomes
\begin{equation}
\label{eq-loss-at}
\Loss(x, l_x)=\Loss_{\E}(x+\eta_x, l_x)+\lambda(\Loss_{\rm{MSE}}(\F(x+\eta_x), x)+\Loss_{\rm{MSE}}(\F(x), x))
\end{equation}
where $\eta_x=\arg\max_{||\eta||_{\infty}<\epsilon}\Loss_{\E}(x+\eta, l_x)$.

\begin{remark}
\label{rem-20}
The loss function is a joint loss over
the classification space as well as
a reconstruction loss on the decoder phase,
which makes $\F$ both as a classifier and as a feature learner.
The CAE is different from the usual classifiers in that several coordinates instead of one
are used to classify as well as to learn features.
\end{remark}

Given a set  of training samples,  we can train $\F$
with the standard gradient descendent method
based on the above loss functions.
In what below, we give a termination criterion for the training algorithm.
Since a sample $x$ with label $l_x$ should satisfy $C(x)=1$,  we have $\E(x)\cdot M_{l_x}-\E(x)\cdot M_i\ge C_0$ for $i\neq l_x$. Thus
$$\Loss_{\E}(x,l_x)
=-\ln{\frac{e^{\E(x)\cdot M_{l_x}/\gamma}}{\sum_{i=0}^{o}e^{\E(x)\cdot M_i/\gamma}}}
=-\ln{\frac{1}{\sum_{i=0}^{o}e^{-(\E(x)\cdot M_{l_x}-\E(x)\cdot M_i)/\gamma}}}
\le\ln{(1+9e^{-C_0/\gamma})}.$$
So  when $\Loss_{\E}(x, l_x)\le\ln{(1+9e^{-C_0/\gamma})}$ for most $(x, l_x)$ in the training set and $\Loss(x, l_x)$ is small enough,  we terminate the algorithm.

\subsection{The open-world classification algorithm}
Let $\F$ be a trained CAE with the loss functions \eqref{eq-loss} or \eqref{eq-loss-at}.
In this section, we show how to use $\F$ as an open-world classifier.
Precisely, the algorithm will return a label in $\L$
if the input is considered to be a normal sample and
return label $-1$ if the input is considered to be a {\em problem image}.
A problem image could be either an outlier or an adversary.

For an input $x\in\I^n$, according to the loss function in \eqref{eq-L1}, the {\em pseudo-label} of $x$ is
\begin{equation}
\label{eq-L}
L(x)=\arg\max_{l\in\L} W_l(\E(x)):\R^n\rightarrow \L.
\end{equation}
With the pseudo-label $L(x)$, the output of the network is defined as
\begin{equation}
\label{eq-F23}
\widehat{{\F}}(x) = \D(p(\E(x), L(x))) = \D(\E(x)\circ M_{L(x)}).
\end{equation}
The main idea of the algorithm is to compare $x$ and
$\widehat{{\F}}(x)$ to see if $x$ is a normal sample or a problem image.
%
%
%

\begin{algorithm}[H]
\caption{CAE}
\label{alg1}
Input: $x\in\I^n$, hyperparameters $C_0, b_s, b_u$ in $\R_{+}$.

Output: a label of $x$ in $\L$ or label $-1$ meaning that $x$ is a problem image.
\begin{description}
\item[S1]
Compute $\F(x)$ with \eqref{eq-F2}, $L(x)$ with \eqref{eq-L}, and $\widehat{\F}(x)$ with \ref{eq-F23}.
%

\item[S2]
If $||\widehat{\F}(x)-x||\le b_s$ for a given threshold $b_s\in\R_{+}$,
then output: label $L(x)$.

\item[S3]
If $||\F(x)-x||\ge b_u$ for a given threshold $b_u\in\R_{+} (b_s<b_u)$,
then output label $-1$.

\item[S4]
This step treats the $x$ satisfying
$b_s<||\widehat{\F}(x)-x||< b_u$.
\begin{description}
\item[S4.1]
Let $d_l=||\D(p(\E(x), l))-x||$,  for all $l\in\L$.

\item[S4.2]
If $d_{L(x)}\le d_l$ for  $l\in \L$,  then output label $L(x)$.

\item[S4.3]
Output label $-1$.
\end{description}
\end{description}
\end{algorithm}

We explain the algorithm as follows.
Let $\O$ be the set of images to be classified.
%
In Step S2, when $\widehat{\F}(x)$ and $x$ are similar enough, we think $x$ is in $\O$ and its label is given.
In Step S3, when $x$ is not anything like $\widehat{\F}(x)$, $x$ is a problem image.
In Step S4, we are not sure whether $x$ is an element in $\O$ or a problem image.
In this case, we give a refined analysis by computing  $\D(p(\E(x)),l)$ for all $l\in\L$
and checking if $\D(p(\E(x)),L(x))$ is more similar to $x$
than $\D(p(\E(x)),l)$.
%

\begin{remark}
\label{rem-21}
The input to Algorithm \ref{alg1} could be a sample in $\O$ or a problem image.
By the {\em accuracy} of the algorithm on an input set, we use the usual meaning,
that is, the percentage of samples which are given the correct labels in $\L$.
By the {\em total accuracy} of the algorithm on an input set, we mean
the percentage of inputs which are in $\O$ and are given the correct labels
and the inputs which are problem images and are considered as problem images.
\end{remark}

\subsection{Existence of the CAE}
\label{sec-e}

The main idea of CAE is that images with the same label are mapped into a compression space
and images with different labels are compressed into disjoint compression spaces by the encoder.
Furthermore, the images to be classified can be approximately recovered from their compression spaces.
In this section, we prove that DNNs with such properties exist in certain sense.

Let $x\in\R^{n}$ and $r\in\R_{+}$. When $r$ is small enough,  all images in
$$\B(x,r) =\{x+ \eta\,|\, \eta\in\R^n, ||\eta||< r\}$$
can be considered to have the same label with $x$.
%
Therefore, the set of images $\O$ to be classified can be considered
as bounded open sets in $\R^n$.
%
This observation motivates the existence result given below,
where  $V(S)$ is used to denote the volume of $S\subset \R^n$.

\begin{theorem}
\label{th-3}
Suppose that the objects to be classified consist of a finite number of
disjoint and bounded open sets $S_l\subset\I^n$  such that
objects in $S_l$ have label   $l\in\L=\{0,\ldots,o\}$.
Then, for any $m \in\N_{+}$ and for any $\epsilon,\gamma$ in $\R_+$, there exist DNNs
$\E:\I^n\to\R^m$ and $\D:\R^m\to \I^n$  such that
\begin{description}
\item[1.]
Let $K=\{x\in\bigcup_{l=0}^{o}S_l \,|\,  \E(x)\in\E(S_l)\cap\E(S_k)$ $\hbox{ for some } l\ne k\}$. Then $\frac{V(K)}{\sum_{i}V(S_i)}<\epsilon$.
\item[2.] Let $\widehat{S}_{l}= \{x\in S_l \,|\,
   ||\D(\E(x))-x||<\gamma\}$ for   $l\in\L$. Then  $\frac{V(\widehat{S}_{l})}{V(S_{l})}>1-\epsilon$.
\end{description}
\end{theorem}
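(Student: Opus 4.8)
The plan is to construct $\E$ and $\D$ explicitly and separately. Since the sets $S_l$ are disjoint, bounded and open, and only finitely many, they lie at positive pairwise distance from each other only after shrinking slightly; more precisely, for any $\delta>0$ the subsets $S_l^\delta=\{x\in S_l: \B(x,\delta)\subset S_l\}$ satisfy $V(S_l\setminus S_l^\delta)\to 0$ as $\delta\to 0$, and for $\delta$ small the $S_l^\delta$ are compact-closure sets at positive mutual distance. I would fix $\delta$ small enough that $V(S_l\setminus S_l^\delta)<\tfrac{\epsilon}{2}V(S_l)$ for every $l$, and work with the separated family $\{\overline{S_l^\delta}\}$. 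The outliers of $K$ and the badly-reconstructed points of $\widehat S_l$ will be absorbed into the thin shells $S_l\setminus S_l^\delta$ plus one more small exceptional set controlled by a universal-approximation error.

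For the encoder, I would first build a continuous (indeed smooth) function $g:\I^n\to\R^m$ that genuinely separates the labels: choose $o+1$ points $c_0,\dots,c_o\in\R^m$ that are pairwise distinct (easy for any $m\ge 1$), pick disjoint open balls $B_l$ around $c_l$ in $\R^m$, and use a smooth Urysohn-type function to send all of $\overline{S_l^\delta}$ into $B_l$ — this is possible because the $\overline{S_l^\delta}$ are disjoint compact sets, so a smooth partition of unity subordinate to a separating open cover gives $g$ with $g(\overline{S_l^\delta})\subset B_l$ and $g$ bounded. Then invoke the universal approximation theorem for ReLU (or sigmoid) networks: there is a DNN $\E$ with $\sup_{x\in\I^n}\|\E(x)-g(x)\|$ as small as we like, in particular small enough that $\E(\overline{S_l^\delta})\subset B_l$ still holds. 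Since the $B_l$ are disjoint, $\E(S_l^\delta)\cap\E(S_k^\delta)=\varnothing$ for $l\ne k$, so any $x$ contributing to $K$ must lie in some $S_l\setminus S_l^\delta$; hence $V(K)\le\sum_l V(S_l\setminus S_l^\delta)<\tfrac\epsilon2\sum_l V(S_l)<\epsilon\sum_l V(S_l)$, giving Part 1. (If the theorem's compression-space structure \eqref{eq-M} is wanted, place $B_l$ inside the $l$-th block of coordinates; this only changes the choice of $c_l$.)

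For the decoder, the idea is that on each $B_l$ we want $\D$ to invert $\E$ approximately on $S_l$. Since $\E$ restricted to $S_l^\delta$ is continuous but not injective in general, I instead argue at the level of the target function: define $h:\R^m\to\I^n$ to be any continuous (smooth) map with the property that $h(\E(x))\approx x$ for $x\in S_l^\delta$. The cleanest way is to first perturb the construction of $\E$ so that it is injective on $\bigcup_l\overline{S_l^\delta}$ — e.g. add to $g$ a small generic linear map $x\mapsto \alpha Px$ with $P:\R^n\to\R^m$ and $\alpha$ tiny; for generic $P$ this is injective on the compact set, and $\alpha$ small keeps $\E(\overline{S_l^\delta})\subset B_l$. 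Then $\E$ is a homeomorphism from $\overline{S_l^\delta}$ onto its image $\E(\overline{S_l^\delta})\subset B_l$; by Tietze extension extend the continuous inverse to a continuous $h_l:\R^m\to\I^n$, patch the $h_l$ together with a smooth partition of unity subordinate to $\{B_l\}$ (and something arbitrary, say constant $0$, off $\bigcup B_l$) to get a single continuous $h:\R^m\to\I^n$ with $h(\E(x))=x$ for all $x\in\bigcup_l\overline{S_l^\delta}$. Finally approximate $h$ by a DNN $\D$ with $\sup\|\D-h\|$ small; using uniform continuity of $\E$ (so $\E(x)$ stays in the region where $\D\approx h$) we get $\|\D(\E(x))-x\|<\gamma$ for all $x\in S_l^\delta$. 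Therefore $\widehat S_l\supseteq S_l^\delta$ and $\tfrac{V(\widehat S_l)}{V(S_l)}\ge 1-\tfrac{V(S_l\setminus S_l^\delta)}{V(S_l)}>1-\epsilon$, which is Part 2.

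The main obstacle is the decoder step: an arbitrary DNN encoder need not be injective on each $S_l$, so "recover $x$ from $\E(x)$" is ill-posed unless we engineer injectivity into $\E$ ourselves. The fix above — adding a tiny generic linear term to make $\E$ injective on the compact shrunk sets, then using Tietze/partition-of-unity to build a continuous one-sided inverse before a final universal-approximation pass — is the crux, and the one point where the hypotheses "bounded open, finitely many, disjoint" are all genuinely used. Everything else (shrinking to $S_l^\delta$, choosing $B_l$, two applications of universal approximation, the volume bookkeeping) is routine.
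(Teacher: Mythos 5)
Part 1 of your proposal is fine and is essentially the paper's encoder argument in cleaner clothing: separate the (shrunk, compact) classes, send them into disjoint target regions, and invoke universal approximation, absorbing the bad set into the thin shells $S_l\setminus S_l^\delta$ plus a small approximation-failure set. The problem is Part 2, and it is exactly at the step you yourself flag as the crux. You propose to make $\E$ injective on $\bigcup_l\overline{S_l^\delta}$ by adding a tiny generic linear map $\alpha P$ with $P:\R^n\to\R^m$. But the theorem asserts the result for \emph{any} $m\in\N_+$, and the whole point of the autoencoder is $m\ll n$. When $m<n$, every linear map $\R^n\to\R^m$ has a nontrivial kernel, and since $S_l^\delta$ has nonempty interior it contains two points differing by a kernel vector; more decisively, by invariance of domain \emph{no} continuous map from a set with nonempty interior in $\R^n$ into $\R^m$ with $m<n$ can be injective. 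So the homeomorphism-onto-image, the Tietze extension of its inverse, and the identity $h(\E(x))=x$ on $\overline{S_l^\delta}$ all fail to exist; the generic-projection trick only works for sets of dimension less than $m$ (finite sets, low-dimensional manifolds), not for full-dimensional open sets.

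The repair, which is what the paper actually does, is to exploit that you only need to recover $x$ up to error $\gamma$, so you need injectivity only at scale $\gamma$, not pointwise. Partition $S_l$ into finitely many open cubes $C_{l,i}$ of side less than $\gamma/\sqrt{n}$ covering all but an $\epsilon$-fraction of its volume, let $E$ be \emph{piecewise constant}, sending all of $C_{l,i}$ to a distinct point $a_{l,i}$ inside the $l$-th compression cube $A_l\subset\R^m$ (any $m\ge 1$ accommodates finitely many distinct points), and let $D$ send the Voronoi cell of $a_{l,i}$ in $A_l$ to the center of $C_{l,i}$. Then $\|D(E(x))-x\|$ is bounded by the cube diameter, hence less than $\gamma$, on all the cubes. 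A final universal-approximation pass replaces $E$ and $D$ by DNNs that agree with them up to $\epsilon_2$ outside exceptional sets of small volume kept away from the cube and cell boundaries, and the volume bookkeeping goes through as in your Part 1. Your encoder half survives unchanged; your decoder half needs to be replaced by this discretization argument (or some other mechanism that never asks a continuous map into $\R^m$ to be injective on a full-dimensional set).
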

\begin{proof}
We give the idea of the proof and the detail of the proof is given in Appendix A.
%
%
Let $\I_0=(0,1)$, $A_0=\I_0^m$, and $A_j=\{y\,|\,y=z+2j\one, z\in A_{j-1}\}$ for $j=1,\dots,o$, where $\one\in \R^m$ is the vector whose coordinates are $1$.
Then we can define piece-wise constant functions
${E}:\I^n\to\R^m$ and ${D}:\R^m\to \I^n$,
such that ${E}$ is approximately the identity map from $S_l$ to $A_l$
with $D$ the inverse map for $l\in\L$,
that is $A_l$ is the compression space of label $l$.
Finally, based on the universal approximation property of DNNs,
the required autoencoder can be constructed.
\end{proof}

\begin{remark}
Theorem \ref{th-3} implies that different $S_l$ are mapped
into approximately disjoint spaces $\E(S_l)\cong A_l$
and almost all elements in $S_l$ can be approximately
recovered from $\E(S_l)$ by $\D$, and thus proves the existence of the CAE.
\end{remark}

\begin{remark}
In Theorem \ref{th-3}, $m$ could be as small as possible,
and $\E(S_l)$ are contained in disjoint unit cubes in $\R^m$.
In practice, in order that the network can be trained efficiently,
we assume $\E(S_l)\subset \R^{m_l}$ for some $m_l\in \N_+$,  $m=\sum_{=0}^o m_l$, and $\R^m$
the direct product of these $\R^{m_l}$.
\end{remark}

\section{Experimental results for CAE}
\label{sec-c}
In this section,  we will give  experimental results based on MNIST dataset in great detail.
The precise structures of the networks used in the experiments
are given in the Appendix B.
The hyperparameters are $\gamma=50$, $\lambda=1$, $C_0=80$,  $b_l=0.04$,  and $b_u=0.09$.
The codes can be found at
\newline https://github.com/yyyylllj/NetB.

\subsection{Accuracy on the test set}
\label{sec-c1}

We first show that the CAE works well for the test set as a classifier.
Among the 10000 samples in the test set of MNIST,
9864 samples are given the correct labels,
43 samples are given the wrong labels,
and
93 samples are considered as problem images.
Some of the samples which are give the wrong label or considered as problem images are given in Figure \ref{fig-202-1}.

\begin{figure}[H]
\centering
\includegraphics[scale=0.55]{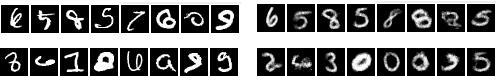}
\caption{The images in left-side part are the input images and
the ones in the right-side part are the corresponding output of CAE.
Row 1: problem images. Row 2: images for which wrong labels are given.}
\label{fig-202-1}
\end{figure}

\begin{table}[H]
\centering
\begin{tabular}{|c|c|c|c|}
  \hline
Data Set &  CAE & LCAE  & NLabels\\
  \hline
MNIST  &98.64$\%$ & 99.97$\%$ & 1.72\\
  \hline
\end{tabular}
\caption{Accuracies for CAE and and LCAE (section \ref{sec-lcae2}).
NLabels is average number of labels in the output list of LCAE.
 }
\label{tab-ac1}
\end{table}

\begin{remark}
As mentioned above, 39 samples are given the wrong labels
and 131 samples are considered as problem images.
As we can see in Figure \ref{fig-202-1},
the quality of these images is quite poor
and they indeed can be considered as problem images.
This property is not all bad and can even be used to identify bad samples in a data set.
A solution to recognize almost all of these problem images is given in section \ref{sec-lcae}
and the result is given in the column LCAE in Table \ref{tab-ac1}.
\end{remark}

\subsection{Recognize outliers}
\label{sec-out}

In this section, we use four types of outliers to check the ability of $\F$ to recognize outliers.

The type 1 outliers are images in $\R^{28\times 28}$,
whose coordinates are generated by $\DD_n$ (iid).
Here, $\DD_n=1/2(\ND+\U)$,  where $\ND$ is the normal distribution and $\U$ is the uniform distribution.

The type 2 outliers are random samples with structures.
They are created by randomly generating a row, a column, or the diagonal, and then by copying the row, column or the diagonal randomly to cover the matrix.

For the type 3 outliers, the middle ($12\times12$) 144 pixels of an image in $\R^{28\times 28}$  are given by the normal distribution $\ND$ (iid).

%
The type 4 outliers for MNIST are generated by reducing the sizes of images in CIFAR-10 to $28\times28$ and change the image from color to grey.
For CIFAR-10, type 4 outliers are obtained from MNIST similarly.


\begin{figure}[H]
\centering
\hspace{2mm}
\includegraphics[scale=0.45]{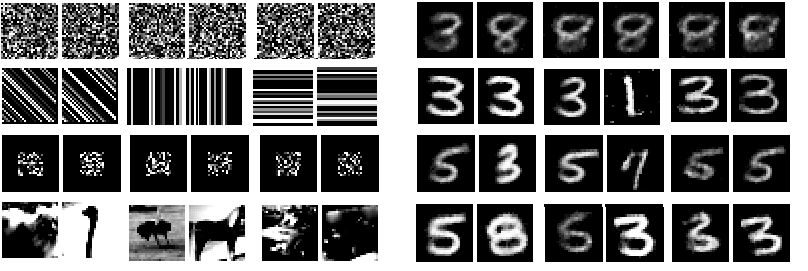}
\caption{The four types of outliers (left-side) for MNIST and theirs outputs (right-side) from $\F$ are given in the four rows from type 1 to type 4.} 
\label{fig-mnc}
\end{figure}

\begin{figure}[H]
\centering
\includegraphics[scale=0.70]{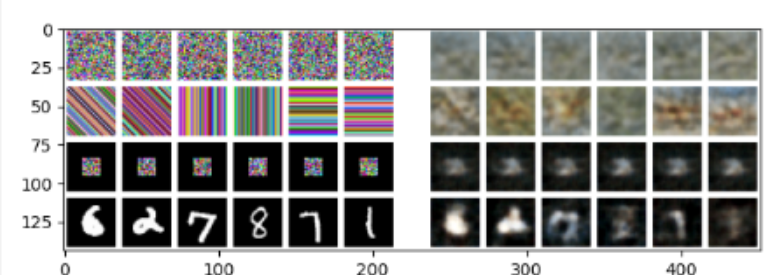}
\caption{Four types of outliers  (left-side) for CIFAR-10 and   outputs (right-side). } 
\label{fig-C10c}
\end{figure}

We compare our network with a CNN $\H$ which is trained with MNIST plus 60000 noise samples with a new label $-1$, representing the class of outliers.
The noise samples  are generated by  $\alpha\ND+\beta\U$,
where  $\alpha$ is a random number in $(0,1)$ and $\beta=1-\alpha$.

We compare $\F$ and $\H$  and the results are given in
the first four rows of Table \ref{tab-co1}.
%
%
As shown in Figure \ref{fig-mnc}, for an outlier $x$, $x$ and $\F(x)$  are quite different, which means $||\F(x)-x||$ is  big  enough to make the network treating  $x$ as a problem image.
As a consequence, our network can recognize all kind of outliers.
On the other hand, $\H$ can only recognize the outliers similar to the training outlier samples.
%
%

We also compare the robustness of $\F$ and $\H$ to recognize outliers.
Let $x_o$ be a type 1 outlier.
Two types of new outliers are generated from $x_o$ as follows.
For $\H$, we use gradient descent for $x_o$ to make $L_{\rm{CE}}(x_o, -1)$ smaller. For $\F$, we use gradient descent for $x_o$ to make
$L_{\rm{MSE}}(\D(p(\E(x_o),L(x_o))-x_o))$ smaller.
Use Type 1.i to denote the new outliers, where each pixel of $x_o$ is changed up to 0.i for $i=1,2$ and the results are listed in the fifth and sixth rows of Table \ref{tab-co1}.
From Table \ref{tab-co1}, we can see that $\F$ is very robust to recognize
this kind of strong outliers, while $\H$ is much less robust.
 
\begin{table}[H]
\centering
\begin{tabular}{|c|c|c|c|c|}
  \hline
  Outliers & CAE(M) & Network $\H$(M) &CAE(C)&Network $\H$(C)\\\hline
Type 1  & 100$\%$ & 99$\%$ & 100$\%$ & 99$\%$\\
Type 2  & 100$\%$ & 97$\%$ & 100$\%$ & 92$\%$\\
Type 3  & 100$\%$ & 3$\%$ & 100$\%$ & 23$\%$\\
Type 4  & 100$\%$ & 26$\%$& 100$\%$ & 40$\%$\\
Type 1.1  & 100$\%$ & 80$\%$& 100$\%$ & 10$\%$\\
Type 1.2  & 100$\%$ & 20$\%$& 100$\%$ & 5$\%$\\
\hline
\end{tabular}
\vskip3pt
\caption{Percentages for CAE and $\H$ to recognize  outliers or the total accuracy defined in Remark  \ref{rem-21},(C) means experiment in CIFAR-10, (M) means MNIST.}
\label{tab-co1}
\end{table}

\begin{remark}
Types 1, 2, 3 are all outliers. We use them to show that
a network trained with samples outliers works well for
samples similar to the training set, but works poorly
for samples different from the training set.
For instance, $\H$ works well for type 1 and type 2 outliers,
but it works poor for type 3 outliers.
The type 5 adversarial outliers are also used to show that
there exist no exact boundaries between outliers and normal samples satisfying the distribution
and outliers.
\end{remark}

To summarize, the CAE can recognize almost all outliers robustly
and this is one of the main advantages of the network introduced in this paper.

\subsection{Defend adversaries}
\label{sec-ad}
In this section, we check the ability of $\F$ to defend adversaries.
For $\F$, $\E(x)$ is the network for classification, so we create adversaries for $\E(x)$.
We use  samples in the training set to generate 3 types of adversaries.

The type 1.i ($i=10,20,30$) adversaries ($L_{\inf}$ adversary)
are created with PGD-$i$~\cite{M2017},
where each step changes 0.01 for every pixel and at most $i$ steps are used.

The type 2.i ($i=40,60,80$) adversaries ($L_0$ adversary)
are obtained with JSMA ~\cite{P2016} by changing $i$ coordinates of a training sample, and each coordinate can change at most 1.

The type $3.i$ ($i=0,1,2$) adversaries are called {\em strong adversaries}
which are generated by two steps.
First, generate a type $1.20$ adversary $x_a$.
Second, use gradient descent on $x_a$ to make $L_{\rm{CE}}(\F(x_a),l)$ bigger
and the change for each pixel is $\le0.i$.
%

\begin{figure}[H]
\centering
\hspace{2mm}
\includegraphics[scale=0.75]{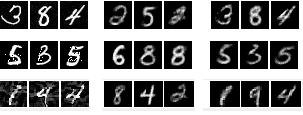}
\caption{Three types adversaries (left-side) and their outputs from $\F$ (middle) are given in the three rows.
The right-side: output of $\D(g(\E(x), l_x))$,  $l_x$ is the label of $x$. }
\label{fig-a22}
\end{figure}

We will compare with the well-known adversarial training method proposed in \cite{M2017}
for a CNN whose structure is given in Appendix B.
The results in Table \ref{tab-ca2} are the adversarial creation rates  from the test set.
The adversarial creation rate for type 1.10 adversaries by $\E$ is about $35\%$.
We use 1000 type 1 adversaries  as input to $\F$.
Among the 1000 inputs,
733 samples are considered as problem images,
267 samples are given wrong labels.
So,  for about $26.7\%$ of the type 1 adversaries of $\E$,
$\F$ gives  wrong labels and  $73.3\%$ of them are considered as problem images.
Therefore, the adversary creation rate for type 1 adversaries is $9.35\%=35\% \cdot 26.7\%$.
%
%
%
%
Results for other types are computed similarly.

\begin{table}[H]
\centering
\begin{tabular}{|l|r|r|r|c|}
  \hline
Attack &  AT & CAE & LCAE & NLabels\\
  \hline
Type  1.10  & 6$\%$  &3$\%$   &$<1\%$ &3.04\\
Type  1.20  & 15$\%$ &15$\%$  &$<1\%$& 3.10\\
Type  1.30  & 55$\%$ &22$\%$ & $1\%$&3.13\\
Type  2.40  & 55$\%$ &19$\%$ & $<1\%$&3.58\\
Type  2.60  & 76$\%$ &23$\%$ & $1\%$&3.64\\
Type  2.80  & 85$\%$ &30$\%$  & $2\%$&3.70\\
Type  3.0 & 2$\%$    &1$\%$  &$<$1$\%$&2.87\\
Type  3.1 &  18$\%$  &11$\%$ & 1$\%$&3.22\\
Type  3.2 &  51$\%$  &34$\%$ & 4$\%$&3.94\\

  \hline
\end{tabular}
\caption{Adversarial creation rates for CAE (second column) and LCAE
(third column; see section \ref{sec-lcae3}).
Attack is the attack methods: we use PGD-$i$ for type 1$.i$ adversaries
and JSMA for type 2$.i$ adversaries.
AT is the results for the network trained with adversarial training~\cite{M2017}.
}
\label{tab-ca2}
\end{table}

From Table \ref{tab-ca2},
we have the following observations.
(1)  CAE is always better than the adversarial training, and much better for more difficult adversaries.
The performance of CAE is more stable than that the adversarial training,
whose adversarial creation rates are less than 30$\%$.
For adversarial training, the adversarial creation rate for type 2.80 attack method is 85$\%$.
(2) The results for $\F$ and the adversarial training are different.
Since the input samples are adversaries of $\E$, the CAE cannot give
the correct label for these adversaries and the adversarial creation rate for CAE
is the percentage of inputs for which CAE give wrong labels.
%
%
%
%
We can see that CAE also performs better for strong adversaries.

\section{List classifier to defend adversaries}
\label{sec-lcae}

It was widely believed that adversaries are inevitable for the current DNN framework~\cite{asulay1,Bast1,adv-inev1}.
A possible way to alleviate the problem is to give several
labels instead of one. In this section, we give such an approach
based on the CAE.

\subsection{The list classifier LCAE}
\label{sec-lcae1}

The list classification algorithm is motivated by Figure \ref{fig-a22}, where the label $L(x)$ is wrong, but
$\D(g(\E(x), l_x))$ gives a very close approximation to the input $x$.
The idea is to output all labels $l$ such that  $\D(g(\E(x), l))$ is similar to $x$ with ``high probabilities''.

We first define a distance between two images.
Let $x\in \R^n$, and $A(x)$  the average of all coordinates of $x$.
For $a\in \R$, define $S(a)=1$ if $a>0$, and $S(a)=0$ if $a\le0$.
Define $\widehat{x}=(\widehat{x}_1,\ldots,\widehat{x}_n)\in \{0,1\}^n$,
where  $\widehat{x}_i=S(x_i-A(x))$.
We call $\widehat{x}$ the standardization of $x$.
Define the distance between $x$ and $y$ as $\Dis(x,y)=\frac{A(x-y)}{A(x+y)}$ for $x,y\in \R^n$.

Suppose that $\F$ in \eqref{eq-F1} is a trained CAE with $\L$ as the label set.
We do image classification by giving several possible answers.
\begin{algorithm}[H]
\caption{LCAE}
\label{alg2}
The input: $x\in\I^n$, hyperparameters $B$ and $D$ (in the experiments below, we choose B=14 and D=$10\%$).

The output: a list of labels and the corresponding  images.
\begin{description}
\item[S1]
Compute $\F(x)$ with \eqref{eq-F2}  and $L(x)$ with \eqref{eq-L}.

\item[S2]
Let $x_l=\D(g(\E(x), l))$,  for all $l\in\L$.

\item[S3]
Compute $L_l=||x_l-x||$, $D_l=\Dis(\widehat{x}_l,\widehat{x})$, and $E_l=D_lL_l$
for $l\in\L$.

\item[S4]
The probability for $x$ to be an outlier is $P_{-1}=\frac{1-e^{-(BE_m)^2}}{1+e^{-(BE_m)^2}}$, where $E_{m}=\min_l E_l$.

\item[S5]
The probability for $x$ to have label $l$  is
 ${P}_l=(1-P_{-1})\bar{P}_l$,
where $\bar{P}_l=\frac{1/E_l}{\sum_j1/E_j}$ and
 $l\in\L$.

\item[S6]
Output $(l,x_l)$ if ${P}_l> D$ for $l\in\L$.

\end{description}
\end{algorithm}
 We give an illustrative example.
In Figure \ref{fig-G1}, the images $x,x_l, \widehat{x},\widehat{x}_l$ are given.
In Table \ref{tab-Gt1}, the probabilities $P_l$ are given.
The  probability of the picture to have label $1$ is $88.80\%$
and all other probabilities are smaller than $0.1$.

\begin{figure}[H]
\centering
\includegraphics[scale=0.6]{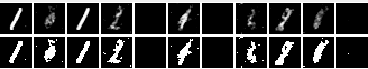}
\caption{The first row is $x$ and $x_l$ in S2 of Algorithm \ref{alg2} and
the second row is $\widehat{x}$ and $\widehat{x}_l$, for $l=0,\ldots,9$.}
\label{fig-G1}
\end{figure}

\begin{table}[H]\scriptsize
\centering
\begin{tabular}{|c|c|c|c|c|c|c|c|c|c|c|c|}
\hline
%
$l=$ & $-1$ & $0$ & $1$ & $2$ & $3$ & $4$  & $5$ & $6$ & $7$ & $8$ & $9$ \\\hline
 $\Dis(x,x_l)$
 && 0.0473 & 0.0029 & 0.0290 & 0.0727 & 0.0266 & 0.0727  & 0.0515 & 0.0255 & 0.0357 & 0.0727 \\
 $\Dis(\widehat{x},\widehat{x}_l)$
 && 0.3848 & 0.069  & 0.2759 & 1      & 0.2820 & 1       & 0.4951 & 0.2902 & 0.2773 & 1 \\
$E_l$ &  &  0.0182&0.0002  &0.0080  &0.0727  &0.0075  &0.0727  &0.0255  &0.0074  &0.0099  &0.0727\\
$\bar{P}_l$&&0.0098&0.8881& 0.0222& 0.0024&0.0237& 0.0024& 0.0070& 0.0240& 0.0179&    0.0024\\
${P}_l$&$0.01\%$&0.98$\%$&88.80$\%$& 2.22$\%$& 0.24$\%$&2.37$\%$& 0.24$\%$& 0.70$\%$& 2.40$\%$& 1.79$\%$&    0.24$\%$\\
\hline
\end{tabular}
\caption{The probabilities for $x$ to have label $l$}
\label{tab-Gt1}
\end{table}

In the rest of this section, we give experimental results for LCAE.
The structure and hyperparameters of the CAE are the same as that
used in the preceding section \ref{sec-c}.
The codes can be found at
\newline
https://github.com/yyyylllj/NetB.

\subsection{Accuracy on the test set}
\label{sec-lcae2}
%
We use LCAE to the test set of MNIST, which contains 10000 samples.
%
%
%
The number of samples whose output contains the correct label is 9997.
The number of samples whose output only contains the correct label is 5712.
The number of samples whose output contains outlier is $1$.
In Table \ref{tab-lc1}, we give the numbers of labels in the output lists.
The average number of labels in the output list is $1.722$.

Comparing to the result in Table \ref{tab-ac1},
at the cost of outputting a list containing $1.722$ labels,
the accuracy could be increased from 98.64$\%$ to
99.97$\%$.

\begin{table}[H]
\centering
\begin{tabular}{|c|c|c|c|c|c|c|}
  \hline
Number of labels & 1 & 2 & 3 & 4 & 5 & 6 \\
Number of samples  & 5712 & 2122 & 1500 & 560 & 104 & 2 \\
  \hline
\end{tabular}
\caption{Number of labels in the output}
\label{tab-lc1}
\end{table}

In Figure \ref{fig-G2}, we give the unique sample
whose output contains the outlier.
In Figures \ref{fig-G3}, \ref{fig-G3-bc}, \ref{fig-G3-bc1},  we give the three samples whose outputs do not contain the correct label.
In Table \ref{Gt2}, we give the probabilities of these figures.

\begin{figure}[H]
\centering
\hspace{2mm}
\includegraphics[scale=0.55]{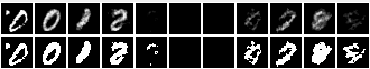}
\caption{Output contains the correct label $0$ and the outlier.
The first row is $x$ and $x_l$ in S2 of Algorithm \ref{alg2} and
the second row is $\widehat{x}$ and $\widehat{x}_l$, for $l=0,\ldots,9$.
}
\label{fig-G2}
\end{figure}

\begin{figure}[H]
\centering
\hspace{2mm}
\includegraphics[scale=0.55]{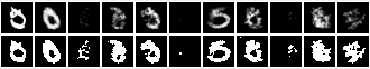}
\caption{Output contains labels $0$, $3$, $6$, but does not contain the correct label $5$.}
\label{fig-G3}
\end{figure}
\begin{figure}[H]
\centering
\hspace{2mm}
\includegraphics[scale=0.62]{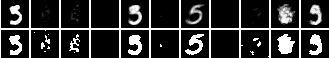}
\caption{Output contains labels $3$ and $9$, but does not contain the correct label $5$.}
\label{fig-G3-bc}
\end{figure}
\begin{figure}[H]
\centering
\hspace{2mm}
\includegraphics[scale=0.62]{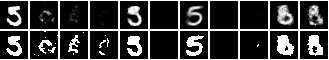}
\caption{Output contains label $3$, but does not contain the correct label $5$.}
\label{fig-G3-bc1}
\end{figure}

\begin{table}[H]\footnotesize
\centering
\begin{tabular}{|c|c|c|c|c|c|c|c|c|c|c|c|c|}
\hline
& $l=$ & $-1$ & $0$ & $1$ & $2$ & $3$ & $4$  & $5$ & $6$ & $7$ & $8$ & $9$ \\\hline
Figure \ref{fig-G2}&
 $P_l$& $12.5\%$&13.1$\%$& 11.7$\%$& 12.9$\%$& 4.52$\%$& 4.20$\%$& 4.20$\%$& 9.39$\%$& 8.50$\%$& 11.7$\%$&    7.21$\%$\\
 Figure \ref{fig-G3}&
$P_l$&0.25$\%$&36.2$\%$& 1.91$\%$& 6.94$\%$& 14.7$\%$& 1.25$\%$& 8.27$\%$& 12.4$\%$& 1.34$\%$& 12.5$\%$&  4.08$\%$\\
Figure \ref{fig-G3-bc}&
$P_l$&0.01$\%$&0.63$\%$& 0.75$\%$& 0.43$\%$&59.5$\%$& 0.43$\%$& 4.68$\%$& 0.43$\%$&0.65$\%$& 2.64$\%$&     29.7$\%$\\
Figure \ref{fig-G3-bc1}&
$P_l$&0.01$\%$&0.82$\%$& 0.96$\%$& 0.77$\%$& 76.6$\%$ &0.53$\%$& 7.42$\%$& 0.53$\%$& 0.56$\%$& 4.95$\%$&    6.77$\%$\\
\hline
\end{tabular}
\caption{The probabilities for the inputs in Figures \ref{fig-G2},
\ref{fig-G3},
\ref{fig-G3-bc},
\ref{fig-G3-bc1}
}
\label{Gt2}
\end{table}

\subsection{Defend adversaries}
\label{sec-lcae3}

We check the ability of LCAE to defend adversaries
for the three types of adversaries given in section \ref{sec-ad}.
The creation rates of adversaries for LCAE are given in the column ``LCAE'' of Table \ref{tab-ca2},
and the numbers of labels in the outputs of LCAE are given in column ``NLabels''.
From these tables, we can see that the  classification list  almost always
contains the correct labels for various adversaries by outputting about 3.5 labels.
In summary, the list classification can be considered to give an uncertain but lossless classification for these adversaries at the cots of giving about 3.5 labels from Table \ref{tab-ca2}.

In the following figures, we give three adversaries and the outputs of LCAE.
In Table \ref{Gt6}, we give the probabilities for these adversaries.
%
\begin{figure}[H]
\centering
\hspace{2mm}
\includegraphics[scale=0.54]{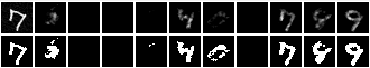}
\caption{A type 1.10 adversary of 7 (section \ref{sec-ad}); the correct label $7$ is given.
The first row is $x$ and $x_l$ in S2 of Algorithm \ref{alg2} and
the second row is $\widehat{x}$ and $\widehat{x}_l$, for $l=0,\ldots,9$.
}
\label{fig-G4}
\end{figure}

\begin{figure}[H]
\centering
\hspace{2mm}
\includegraphics[scale=0.55]{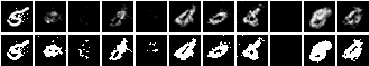}
\caption{A type 2.80 adversary of 5 (section \ref{sec-ad}); the wrong output label is 8.}
\label{fig-G71}
\end{figure}

\begin{figure}[H]
\centering
\hspace{2mm}
\includegraphics[scale=0.63]{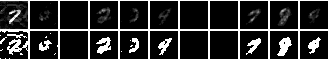}\\
\caption{A type 3.1 adversary of 7 (section \ref{sec-ad}); the wrong output label is $8$. }
\label{fig-Gi1}
\end{figure}


\begin{table}[H]\footnotesize
\centering
\begin{tabular}{|c|c|c|c|c|c|c|c|c|c|c|c|c|}
\hline
&$l=$ & $-1$ & $0$ & $1$ & $2$ & $3$ & $4$  & $5$ & $6$ & $7$ & $8$ & $9$ \\\hline
Figure \ref{fig-G4}&
 $P_l$&0.74$\%$&6.79$\%$& 3.45$\%$& 3.45$\%$& 3.67$\%$& 12.2$\%$& 7.34$\%$ &3.45$\%$& 27.2$\%$& 10.6$\%$& 20.9$\%$\\
Figure \ref{fig-G71}&
 $P_l$ &2.80$\%$&6.92$\%$& 3.26$\%$& 9.11$\%$& 3.03$\%$& 18.4$\%$& 7.18$\%$& 11.6$\%$& 2.67$\%$& 21.8$\%$&  13.1$\%$\\
Figure \ref{fig-Gi1}&
 $P_l$ &3.84$\%$&8.69$\%$& 4.69$\%$& 14.1$\%$& 8.86$\%$& 9.76$\%$& 4.69$\%$& 4.69$\%$& 14.9$\%$& 15.5$\%$& 10.1$\%$\\
\hline
\end{tabular}
\caption{Probabilities for the samples in Figures
\ref{fig-G4},
\ref{fig-G71},
\ref{fig-Gi1}.
 }
\label{Gt6}
\end{table}

\subsection{Recognize outliers}
\label{sec-lcae4}

In section \ref{sec-c}, it has already shown that CAE can recognize almost all outliers.
In this section, we give a more detailed analysis on the ability of $\F$ to defend outliers by giving the results of LCAE.
In Table \ref{GN101}, we give the percentages of the samples
whose $P_{-1}$ is lager than  $50\%$ and $80\%$, respectively.
From the table, we see that $P_{-1}>50\%$  for all images.
If a sample has $P_{-1}>50\%$, it is certain to be an outlier
and this explains the results in Table \ref{tab-co1} in more detail.

\begin{table}[H]
\centering
\begin{tabular}{|l|c|c|}
\hline
Outlier &  $P_{-1}>50\%$ & $P_{-1}>80\%$  \\\hline
Type 1  & 100$\%$& 100$\%$\\
Type 2& 100$\%$& 100$\%$\\
Type 3&100$\%$& 64$\%$\\
Type 4 &100$\%$&98$\%$\\
\hline
\end{tabular}
\caption{The percentages of outliers which have probability bigger than  $50\%$ and $80\%$}
\label{GN101}
\end{table}

In Figures \ref{fig-Gn-3} and \ref{fig-Gn-4}, two outliers in
Figure  \ref{fig-mnc} are given, respectively.
Their corresponding probabilities are given in Table \ref{Gt202}.

\begin{figure}[H]
\centering
\hspace{2mm}
\includegraphics[scale=0.55]{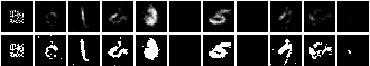}
\caption{A type 3 outlier from Figure \ref{fig-mnc} and the output from LCAE.}
\label{fig-Gn-3}
\end{figure}

\begin{figure}[H]
\centering
\hspace{2mm}
\includegraphics[scale=0.55]{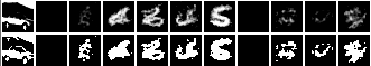}
\caption{A type 4 outlier in Figure \ref{fig-mnc} and the output from LCAE.}
\label{fig-Gn-4}
\end{figure}

\begin{table}[H]
\footnotesize
\centering
\begin{tabular}{|c|c|c|c|c|c|c|c|c|c|c|c|c|}
\hline
&$l=$ & $-1$ & $0$ & $1$ & $2$ & $3$ & $4$  & $5$ & $6$ & $7$ & $8$ & $9$ \\\hline
Figure \ref{fig-Gn-3}&
 $P_l$&79.5$\%$&2.20$\%$&2.44$\%$&2.17$\%$&2.20$\%$& 1.32$\%$&2.73$\%$&1.32$\%$&2.45$\%$& 2.29$\%$&1.36$\%$\\ \hline
 Figure \ref{fig-Gn-4}&
 $P_l$&99.0$\%$&0.01$\%$&0.01$\%$&0.01$\%$&0.01$\%$& 0.01$\%$&0.01$\%$&0.01$\%$&0.01$\%$& 0.01$\%$&0.01$\%$\\ \hline
\end{tabular}
\caption{The probabilities for the images in Figures \ref{fig-Gn-3} and \ref{fig-Gn-4}.}
\label{Gt202}
\end{table}

\subsection{Decouple-classification with LCAE}
\label{sec-lcae5}
Consider a special kind of outliers  obtained by ``adding'' two images from MNIST with different labels,
some of which are given in Figure \ref{fig-o3w}.
These images are not numbers, but they have strong characteristic of numbers.
\begin{figure}[H]
\centering
\hspace{2mm}
\includegraphics[scale=1.1]{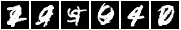}\\
\caption{Mixing of two numbers}
\label{fig-o3w}
\end{figure}


As an application of LCAE, we   give a {\em decouple-classification} algorithm which
can be used to find one or more of the elements in a sample
containing two or more well-mixed elements from $\O$.

We use 1000  outliers of this kind  as inputs to LCAE.
For about $54\%$ of these outliers,  the two numbers used to form the images are found by LCAE, and for $99\%$ of them,  one of the two numbers is found.

We give examples in Figures \ref{fig-Gnf1} and \ref{fig-Gnf2},
and the corresponding probabilities are given in Table \ref{tab-Gnf}.

\begin{figure}[H]
\centering
\hspace{2mm}
\includegraphics[scale=0.55]{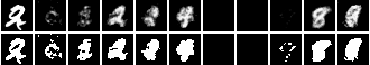}\\
\caption{LCAE finds the two number $2$ and $9$ used to form the image.}
\label{fig-Gnf1}
\end{figure}

\begin{figure}[H]
\centering
\hspace{2mm}
\includegraphics[scale=0.55]{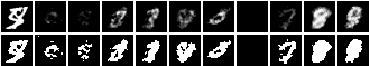}\\
\caption{LCAE finds one of the two number $4$ and $5$ used to form the image.}
\label{fig-Gnf2}
\end{figure}

\begin{table}[H]
\footnotesize
\centering
\begin{tabular}{|c|c|c|c|c|c|c|c|c|c|c|c|c|}
\hline
&$l=$ & $-1$ & $0$ & $1$ & $2$ & $3$ & $4$ &  $5$ & $6$ & $7$ & $8$ & $9$  \\\hline
Figure \ref{fig-Gnf1}&
$P_l$&3.07$\%$&3.25$\%$&7.39$\%$&20.4$\%$&11.4$\%$& 12.2$\%$ &1.73$\%$&1.73$\%$&2.14$\%$& 15.41$\%$&21.1$\%$\\
Figure \ref{fig-Gnf2}&
$P_l$&2.50$\%$&3.31$\%$&3.99$\%$&9.02$\%$&10.0$\%$&  12.3$\%$ &4.44$\%$&1.94$\%$&6.83$\%$& 21.9$\%$&23.6$\%$ \\
%
 %
\hline
\end{tabular}
\caption{The probabilities for the images in Figures \ref{fig-Gnf1} and \ref{fig-Gnf2}.}
\label{tab-Gnf}
\end{table}

\section{Conclusion}
\label{sec-conc}

In this paper, we consider the problem of building robust DNNs to defend
outliers and adversaries.
In the open-world classification problem, the inputs to the DNN
could be outliers which are irrelevant to the training set.
The objects to be classified  usually consist of
a low-dimensional subspace of the total input space to the DNN and the majority
of the input are outliers.
So, the DNN need to recognize outliers in order to be used in  open-world applications.
A more subtle robust problem is that, for almost all DNNs with moderate complex structures, there exist adversary samples.
The ability to defend adversaries is important for the DNN to be used in safety-critical applications.

In this paper, we present a new neural network structure which is more robust to recognize
outliers and to defend adversaries.
The basic idea is to change the autoencoder from an un-supervised learning
method to a classifier, where the encoder maps images with different labels
into disjoint compression subspaces and the decoder recovers the image from its
compression subspace.

The newly  introduced classification-autoencoder can recognize
almost all outliers due to the fact that the output of the autoencoder
is always similar the objects to be classified, and hence
achieves state of the art for outlier recognition.

Since adversaries are seemly inevitable for the current DNNs,
we introduce the list-classification based on the CAE, which outputs several labels instead of one.
According to our experiments, the list classifier can give
near lossless classification in the sense that the output list
contains the correct label for almost all adversaries
and the size of the output list is reasonably small.

An overall framework for robust classification could be done as follows.
First, CAE and LCAE are used to identify those inputs which are elements of $\O$ or outliers with high probabilities.
Second, for the remaining ``fuzzy'' samples, we may use LCAE to output
several labels and their corresponding recovered images for further analysis.



\vskip20pt
\section*{Appendix A. Proof of Theorem 1}
First introduce the notion of Voronoi tessellation.
 Let  $\I_0=(0,1)\subset\R$,
$A\subset \I_0^n$ a convex open set, and $P=\{p_i,i=1,\ldots,t\}\subset A$.
For each $p_i$, let $R_i$ be the set of points in $A$, which are strictly closer to $p_i$ than to
$p_j,j\ne i$.
Then $R_i$ is an open convex set
and $\{R_i\}_{i=1}^{t}$
are called the {\em Voronoi tessellation} of $A$ generated by $P$,
and $R_i$ is called the Voronoi region with generating point $p_i$~\cite{voro}.
It is clear that
$\overline{A}=\cup_{i=1}^t (\overline{R}_i)$,
where $\overline{R}_i$ is the closure of $R_i$.

We first consider the case where the objects to be classified
consist of a single connected open set.

\begin{lemma}
\label{th-1}
Let $S$ be a bounded  open set in $\R^n$ and $A$ a bounded and  convex open set in $\R^m$.
%
%
For any $\epsilon, \gamma\in\R_{+}$ and any $m\in\N_{+}$,  there exist functions
$E:\I^n\to\R^m$ and $D:\R^m\to\I^n$, which are piecewise continuous functions with a finite number of continuous regions and satisfy
\begin{description}
\item[0.] $E(x)=0$ if $x\not\in S$ and $D(y)=0$ if $y\not\in A$.
\item[1.]
$V_1=\{x\in S\subset \R^n\,|\, E(x)\in A\}$ satisfies $\frac{V({V}_1)}{V(S)}>1-\epsilon$;
\item[2.]
$V_2=\{y\in A\subset \R^m\,|\,D(y)\in S\}$ satisfies $\frac{V({V}_2)}{V(A)}>1-\epsilon$;
\item[3.]
$V_3=\{x\in S\subset\R^n\,|\,||x-D(E(x))||<\gamma\}$ satisfies $\frac{V({V}_3)}{V(S)}>1-\epsilon$.
\end{description}
\end{lemma}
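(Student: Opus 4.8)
The plan is to turn the statement into a finite matching problem. I would carve the open set $S$ into finitely many tiny cubes that together fill all but an $\epsilon$-fraction of $V(S)$, carve the convex set $A$ into the \emph{same number} of Voronoi cells (which automatically exhaust $A$ up to a set of measure zero), pair the cubes with the cells, and then let $E$ map the $i$-th cube into the $i$-th Voronoi cell and let $D$ collapse the $i$-th Voronoi cell to the centre of the $i$-th cube. With this design $D\circ E$ can never move a point out of its own (tiny) cube, which is exactly what property 3 asks for. (We regard $S\subseteq\I^n$, as in Theorem~\ref{th-3}.)

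First I would produce the two decompositions. Since $S$ is open and bounded, the standard dyadic decomposition of an open set into countably many almost-disjoint cubes, followed by subdividing and discarding all but finitely many of them, yields pairwise disjoint open cubes $C_1,\dots,C_N\subset S$, each of diameter $<\gamma$, with $V(\bigcup_{i=1}^{N}C_i)>(1-\epsilon)V(S)$; write $p_i$ for the centre of $C_i$, so $p_i\in C_i\subset S$. For $A$ I would fix any $N$ distinct points $q_1,\dots,q_N\in A$ and take the Voronoi tessellation $R_1,\dots,R_N$ of $A$ generated by them: the $R_i$ are open, convex, nonempty and pairwise disjoint, and $\overline{A}=\bigcup_{i=1}^{N}\overline{R_i}$, so $A\setminus\bigcup_{i=1}^{N}R_i$ is a null set.

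Next I would define the maps. For each $i$, since $R_i$ is open and nonempty it contains a ball, so there is an affine injection $\phi_i:C_i\to R_i$; the $R_i$ being disjoint, this makes $E$ injective on $\bigcup_i C_i$, which is convenient when several such pieces are glued together in the proof of Theorem~\ref{th-3}. Set
\[
E(x)=\begin{cases}\phi_i(x),&x\in C_i,\\ 0,&x\notin\textstyle\bigcup_i C_i,\end{cases}
\qquad
D(y)=\begin{cases}p_i,&y\in R_i,\\ 0,&y\notin A,\end{cases}
\]
giving $D$ the value $0$ on the leftover null set $A\setminus\bigcup_i R_i$. Both maps are piecewise continuous with at most $N+1$ continuous regions, and property 0 holds by construction. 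The three volume estimates are then immediate: if $x\in C_i$ then $E(x)\in R_i\subset A$, so $\bigcup_i C_i\subseteq V_1$ and $V(V_1)\ge\sum_i V(C_i)>(1-\epsilon)V(S)$; if $y\in R_i$ then $D(y)=p_i\in S$, so $\bigcup_i R_i\subseteq V_2$ and $V(V_2)=V(A)$; and if $x\in C_i$ then $D(E(x))=p_i$ with $\|x-p_i\|\le\mathrm{diam}(C_i)<\gamma$, so $\bigcup_i C_i\subseteq V_3$ and $V(V_3)>(1-\epsilon)V(S)$.

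The only step that uses genuine analysis, and hence the main obstacle, is the first decomposition: extracting finitely many disjoint cubes inside the open set $S$ that simultaneously exhaust a $(1-\epsilon)$-fraction of its volume \emph{and} have diameter below $\gamma$. This is routine (inner regularity of Lebesgue measure realised by dyadic cubes, then subdivision for the diameter bound), and the fact that the Voronoi cell boundaries carry no volume is equally routine; once both are in hand the lemma follows.
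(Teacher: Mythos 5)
Your proposal is correct and takes essentially the same route as the paper's proof: finitely many small disjoint cubes exhausting a $(1-\epsilon)$-fraction of $S$ with diameter below $\gamma$, a Voronoi tessellation of $A$ with one cell per cube, and $D$ collapsing each cell to the corresponding cube's centre. The only (harmless) variation is that you embed each cube affinely into its Voronoi cell, whereas the paper sends the whole cube to the cell's generating point, so that $E$ is piecewise constant.
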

\begin{proof}
Without loss of generality, assume $S\subset\I_0^n$.
%
Let $k\in\N_{+}$ and
$$C^{k, d_1, d_2, \dots,  d_n}=(\frac{d_1}{k}, \frac{d_1+1}{k})
\times(\frac{d_2}{k}, \frac{d_2+1}{k})\dots\times(\frac{d_n}{k}, \frac{d_n+1}{k}),$$  where $d_i\in\{0, 1, 2, \dots, k-1\}$.
Let
$${S}_k=\{C^{k, d_1, d_2, \dots,  d_n}|C^{k, d_1, d_2, \dots,  d_n}\subset S \}.$$
It is easy to see that $V({S}_k) = t_k/k^n$, where $t_k$ is the number of cubes in ${S}_k$. When $k$ becomes lager, $V({S}_k)$  will increase and approach to $V(S)$.
Then, we can choose a $k$ such that
%
\begin{equation}
\label{eq-Sk1}
k>\frac{\sqrt{m}}{\gamma}
\hbox{ and }
\frac{V({S}_k)}{V(S)}=\frac{\frac{t_k}{k^n}}{V(S)}>1-\epsilon.
\end{equation}
%
%
For simplicity, let ${S}_k=\{C_i\}_{i=1}^{t_k}$.
%
%
Let
\begin{equation}
\label{eq-Ap}
A_{k}=\{a_i\}^{t_{k}}_{i=1}
\end{equation}
 be $t_{k}$ distinct points in $A$.
Let $R=\{R_i\}^{t_{k}}_{i=1}$ be the Voronoi tessellation of $A$ generated by $A_k$
and $a_i$ the generating point for $R_i$.
Define $E$ and $D$ as follows
\begin{eqnarray}
E(x)&:&\I^n\to\R^m, E(x)=a_i\hbox{ if }x\in C_i\hbox{ and }E(x)=0\hbox{ otherwise.}\\
D(y)&:& \R^m\to\I^{n},D(y)= {c}_i\hbox{ if }y\in R_i\nonumber
\end{eqnarray}
where $ {c}_i$ is the center of $C_i$.
%
It is clear that $E(x)$ is a constant function over each $C_i$
and $D(x)$ is a constant function over each $R_i$.

We now prove that $E$ and $D$ satisfy the properties of the lemma.
From the above construction, we have
$V_1=\cup_{i=1}^{t_k} C_i$ and $V(V_1) = V(S_k)$.
Then property 1 follows from \eqref{eq-Sk1}.
It is easy to see $V_2=\cup_{i=1}^{t_k} R_i$
and $\overline{V}_2 = A$.  Then  $\frac{V({V_2})}{V(A)}=1$, and property 2 is proved.
%
 %
For $C_i\in {S}_{k}$, if $x\in C_i$, then  $D(E(x))$ is the center of $C_i$, and hence $||D(E(x))-x||<\frac{\sqrt{m}}{k}<\gamma$
by \eqref{eq-Sk1}.
Then $V_3 = V_1$ and the lemma is proved.
\end{proof}

From the proof of Lemma \ref{th-1}, we have
\begin{cor}
\label{thcor-1}
Use the notations in Lemma \ref{th-1}. There exists a number $t$ such that
\begin{description}
\item[1.] $V_1=V_3=\cup_{i=1}^{t} C_i$, where $C_i\subset S$ are disjoint open cubes with center $c_i$.

\item[2.] Let $A_t=\{a_i\}_{i=1}^t$ be $t$ distinct points inside $A$
and $R_i,i=1,\ldots,t$ the Voronoi polyhedra generated by $A_t$ in  $A$
and $a_i\in R_i$.
Then  $V_2=\cup_{i=1}^{t} R_i$ and $A=\overline{V}_2$.

\item[3.]
$E(x) = a_i$ for $x \in C_i$ and $E(x) = 0$ otherwise.
$D(y) = c_i$ for $y \in R_i$ and $D(y) = 0$ otherwise.
\end{description}
\end{cor}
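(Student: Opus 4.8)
The plan is to recognize that this corollary is a bookkeeping restatement of the explicit construction already carried out inside the proof of Lemma~\ref{th-1}, so no genuinely new argument is required; I would simply fix the data produced there and read off the three assertions. First I would set $t:=t_k$, where $k$ is the integer chosen in \eqref{eq-Sk1}, and let $\{C_i\}_{i=1}^{t}:=S_k$ be the family of grid cubes $C^{k,d_1,\dots,d_n}$ contained in $S$; these are pairwise disjoint open cubes, and $c_i$ is the center of $C_i$. Next I would take $A_t:=A_k=\{a_i\}_{i=1}^{t}$ to be the $t$ distinct points of $A$ chosen in \eqref{eq-Ap}, and $\{R_i\}_{i=1}^{t}$ the Voronoi tessellation of $A$ they generate, with $a_i\in R_i$. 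With these choices, the maps $E,D$ are exactly the piecewise-constant functions built in the proof of Lemma~\ref{th-1}: $E(x)=a_i$ on $C_i$ and $E(x)=0$ otherwise, $D(y)=c_i$ on $R_i$ and $D(y)=0$ otherwise; this is item~3.

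It then remains only to record the set identities. The proof of Lemma~\ref{th-1} already shows $V_1=\cup_{i=1}^{t}C_i$, and since $x\in C_i$ forces $D(E(x))=c_i$ with $||c_i-x||<\sqrt{m}/k<\gamma$ it also shows $V_3=V_1$; this gives item~1. The same proof shows $V_2=\cup_{i=1}^{t}R_i$, which is the first half of item~2. For the closure statement $A=\overline{V}_2$, I would invoke the property of Voronoi tessellations recalled at the start of this appendix, namely $\overline{A}=\cup_{i=1}^{t}\overline{R}_i$: from $V_2=\cup_i R_i\subset A$ we get $\overline{V}_2\subset\overline{A}$, while $\overline{A}=\cup_i\overline{R}_i\subset\overline{V}_2$ gives the reverse inclusion, so $\overline{V}_2=\overline{A}$, which is the intended meaning of the abbreviated equation.

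Since every ingredient already appears in the proof of Lemma~\ref{th-1}, I do not expect any substantive obstacle. The only points deserving a line of care are making the equality $V_1=V_3$ explicit and noting that the complement $A\setminus V_2$, being a finite union of pieces of Voronoi-cell boundaries, has volume zero, so that the loosely written identity $A=\overline{V}_2$ is read correctly. I would therefore present the proof as a two- or three-line extraction from the proof of Lemma~\ref{th-1}, citing the Voronoi closure identity as the single external fact.
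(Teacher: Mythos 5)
Your proposal is correct and matches the paper exactly: the paper offers no separate argument for this corollary beyond the phrase ``From the proof of Lemma \ref{th-1}, we have,'' and your extraction of $t=t_k$, the cubes $S_k=\{C_i\}$, the points $A_k=\{a_i\}$, the Voronoi cells $R_i$, and the piecewise-constant $E,D$ is precisely the intended reading. Your extra remark that $A=\overline{V}_2$ should be understood via $\overline{A}=\cup_i\overline{R}_i$ (the complement of $V_2$ in $A$ being a measure-zero union of cell boundaries) is a small but welcome clarification of the paper's loose notation.
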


For a set $W\subset\R^n$ and $a\in\R_{+}$, define $W^{a}=\{x\in W\,|\,\exists  r>a, \hbox{ s.t. } \B(x,r)\subset W\}$.

\begin{lemma}
\label{l-1}
%
Let $F:\I_0^n\to\R^m$ be a piecewise linear function with a finite number of linear  regions. Then for any $\epsilon>0$, $\gamma>0$, $\alpha>0$, there exists a DNN $\F:\I_0^n\to\R^m$ such that
$|\F(x)-F(x)|<\epsilon$ for $x\in A^{\gamma}$, where $A$ is any linear region of $F$.
Moreover, $D=\{x \,|\, \F(x)-F(x)|>\epsilon\}$ satisfies $V(D)<\alpha$.
\end{lemma}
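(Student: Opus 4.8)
The plan is to reduce the statement to the universal approximation theorem for DNNs. The one genuine difficulty is that a piecewise linear $F$ is in general \emph{discontinuous} along the boundaries separating its linear regions, whereas every DNN (with ReLU or any continuous activation) is continuous, so there can be no uniform approximation of $F$ on all of $\I_0^n$. The remedy is to approximate only on the complement of a thin neighbourhood of those boundaries, and then observe that this complement already contains every $A^\gamma$ once the neighbourhood is taken thinner than $\gamma$.

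Concretely, I would first let $\{A_j\}_{j=1}^N$ be the (open) linear regions of $F$ and set $\Gamma=\bigcup_{j}\partial A_j\cap[0,1]^n$. Since $F$ has finitely many linear regions, each of which may be taken polyhedral, $\Gamma$ lies in a finite union of hyperplanes, so $V(\Gamma)=0$ and the volume of $N_\delta(\Gamma)=\{x\,|\,\mathrm{dist}(x,\Gamma)<\delta\}$ inside $[0,1]^n$ tends to $0$ as $\delta\downarrow 0$. I would choose $\delta\in(0,\gamma]$ with $V(N_\delta(\Gamma)\cap[0,1]^n)<\alpha$, and put $K=[0,1]^n\setminus N_\delta(\Gamma)$, a compact set. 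Each point of $K$ carries an open $(\delta/2)$-ball missing $\Gamma$, hence contained in a single $A_j$ (a path leaving $A_j$ must cross $\partial A_j\subset\Gamma$), so $F$ is locally linear on $K$ and $F|_K$ is continuous.

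Next I would extend $F|_K$ coordinatewise to a continuous map $G:[0,1]^n\to\R^m$ via the Tietze extension theorem, and then invoke the universal approximation theorem for DNNs with non-polynomial (in particular ReLU) activations, e.g. \cite{uni2,uni3}, to obtain a DNN $\F:\I_0^n\to\R^m$ with $|\F(x)-G(x)|<\epsilon$ on $[0,1]^n$; since $G=F$ on $K$ this gives $|\F(x)-F(x)|<\epsilon$ for all $x\in K$. The final step is bookkeeping: for $x\in A_j^\gamma$ one has $\mathrm{dist}(x,\partial A_j)>\gamma$, and for $k\ne j$, $\overline{A_k}\subset\R^n\setminus A_j$ because $A_j$ is open and disjoint from $A_k$, so $\mathrm{dist}(x,\partial A_k)\ge\mathrm{dist}(x,\R^n\setminus A_j)=\mathrm{dist}(x,\partial A_j)>\gamma$; hence $\mathrm{dist}(x,\Gamma)>\gamma\ge\delta$, i.e. $x\in K$, so the error at $x$ is below $\epsilon$, which is property~1 with $A=A_j$. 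Likewise $D=\{x\,|\,|\F(x)-F(x)|>\epsilon\}\subseteq[0,1]^n\setminus K=N_\delta(\Gamma)\cap[0,1]^n$, so $V(D)<\alpha$.

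I expect the main obstacle to be conceptual rather than computational: recognising that the discontinuity of $F$ cannot be smoothed away and that the correct move is to confine it to a set of arbitrarily small volume — which is exactly the dichotomy built into the statement, namely small error on $\bigcup_j A_j^\gamma$ together with small volume for $D$. The two small technical points to verify carefully are the estimate $V(N_\delta(\Gamma)\cap[0,1]^n)\to 0$ for a finite union of hyperplane pieces, and the elementary distance inequality above guaranteeing $\bigcup_j A_j^\gamma\subseteq K$ whenever $\delta\le\gamma$.
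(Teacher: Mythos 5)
Your proposal is correct and follows essentially the same route as the paper's (very terse) argument: replace the discontinuous piecewise linear $F$ by a continuous function that agrees with $F$ away from a thin neighbourhood of the region boundaries, apply the universal approximation theorem, and absorb the boundary neighbourhood into the small-volume exceptional set $D$. Your write-up merely fills in the details the paper leaves implicit (the measure estimate for the boundary neighbourhood, the Tietze extension, and the inclusion $A^{\gamma}\subseteq K$ for $\delta\le\gamma$), all of which check out.
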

\begin{proof}
It is easy to see that there exists a continuous function $H$ which satisfies that $|F(x)-H(x)|=0$ for $x\in A_{\gamma}$.
Then the lemma follows from the universal  approximation theorem of DNN~\cite{uni1,uni2,uni3}.
\end{proof}

%

Now we give the proof of Theorem \ref{th-3}.

\begin{proof}
Let $A_0=\I_0^m$ and $A_j=\{y\,|\,y=z+2j\one, z\in A_{j-1}\}$ for $j=1,\dots,o$, where $\one\in \R^m$ is the vector whose coordinates are $1$.
We first assume that each $S_l$ is connected.
By Lemma \ref{th-1} and Corollary \ref{thcor-1}, for each $l\in\L$ and $\epsilon_1,\gamma\in\R_{+}$,
there exist functions $E_l:\R^n\rightarrow \R^m$ and $D_l:\R^m\rightarrow \R^n$,  which are piecewise continuous functions with a finite number of  continuous regions and satisfy

\begin{description}
\item[(A1)]
$V_{l,1}=\cup_{i=1}^{t} C_{l,i}$, where $C_{l,i}\subset S_l$ are disjoint open cubes with center $c_{l,i}$.
Furthermore,  $\frac{V(V_{l,1})}{V(S_l)}>1-\epsilon_1$
and  $||x-D_l(E_l(x))||<\gamma$ for $x\in V_{l,1}$.

\item[(A2)]
$U_l=\{a_{l,i}\}_{i=1}^t$ is a set of $t$ distinct points inside $A_l$
and $R_{l,i},i=1,\ldots,t$ the Voronoi polyhedra generated by $U_l$ inside $A_l$
and $a_{l,i}\in R_{l,i}$. Therefore, $A_l = \cup_{i=0}^o \overline{R}_{l,i}$.

\item[(A3)] $E_l(x)$ and $D_l(y)$ are defined as follows:
$E_l(x) = a_{l,i}$ for $x \in C_{l,i}$ and $E_l(x) = 0$ otherwise;
$D_l(y) = c_{l,i}$ for $y \in R_{l,i}$ and $D_l(y) = 0$ otherwise.

\end{description}
%
Define $E$ and $D$ as follows: $E(x)=\sum_{i=0}^{o}E_i(x)$ and
$D(y)=\sum_{i=0}^{o}D_i(y)$.
%
%
%
Note that $E$ is constant over $C_{l,i}$ and $D$ is  constant over $R_{l,j}$.
We call $C_{l,i}$ and $R_{l,j}$ constant regions of $E$ and $D$, respectively.

By Lemma \ref{l-1}, for any $\epsilon_2$, $\epsilon_3$, and $\beta$, there exist DNNs $\E$ and $\D$ such that
\begin{description}
\item[(B1)]
$||\E(x)-E(x)||<\epsilon_2$, for $x\in \I_0^n \setminus S_{-1}$, where $S_{-1}\subset \I_0^n$,
$V(S_{-1})<\epsilon_3$, and $C_{l,i}^{\beta/4}\bigcap S_{-1}=\emptyset$
for any constant region $C_{l,i}$ of $E$.
\item[(B2)]
 $||\D(y)-D(y)||<\epsilon_2$, for $y\in A\setminus S_{-2}$, where
$A=\cup_{i=0}^o A_i$,
$S_{-2}\subset A$,
$V(S_{-2})<\epsilon_3$,
and $R_{l,i}^{\beta/4}\bigcap S_{-2}=\emptyset$ for any constant region $R_{l,i}$ of $D$.
\end{description}

Let $\beta_l$ be the minimum of the distances between any pair of points in $E(S_l)=U_l=\{a_{l,i}\}_{i=1}^t$
and the distances between any point in $E(S_l)$ and the surface of $A_l$.
%
%
Choose the parameters such that
\begin{description}
\item[(C0)]
 $\beta < \beta_l$ for all $l\in\L$.
\item[(C1)]
 $\epsilon_2\le\beta/4$ and $\epsilon_2\le1/2$.
\item[(C2)]
 $\epsilon_3<\epsilon \sum_{l}V(S_l)$.
\item[(C3)]
$\gamma+\epsilon_2<\gamma$, $\epsilon_1+\frac{\epsilon_3}{V(S_l)}<\epsilon$ for all $l$.
\end{description}

We now prove that $\E$ and $\D$ satisfy the properties in the theorem.
Let $x\in K$.
Then  $\E(x)\in\E(S_i)\cap\E(S_j)$  for $i\ne j$.
By property (B1), if $x\in S_i/S_{-1}$ then
$\E(x)\in D_i=(-\epsilon_2,1+\epsilon_2)^m+2i\one$.
Similarly, if $x\in S_j/S_{-1}$ then
$\E(x)\in D_j=(-\epsilon_2,1+\epsilon_2)^m+2j\one$.
By (C1), we have $D_i\bigcap D_j=\emptyset$ if $i\ne j$. So $x\in K$ implies $x\in S_{-1}$, and hence $V(K)<\epsilon_3$  by (B1).
Because of (C2), we have $\frac{V(K)}{\sum_{l}V(S_l)}<\epsilon$.
The first property of the theorem is proved.

Let $x\in C_{l,i}/S_{-1}$. We will prove
\begin{equation}
\label{eq-T1}
||\D(\E(x))-x||\le||D(\E(x))-x||+\epsilon_2\le\gamma+\epsilon_2\le\gamma.
\end{equation}

The first inequality in \eqref{eq-T1} follows from
$||\D(\E(x))-D(\E(x))||<\epsilon_2$ which will be proved below.
By (A3),  $E(x)=a_{l,i}\in R_{l,i}$  for  $x\in C_{l,i}$.
For any $x_0\in \B(E(x),\beta/2)$, $E(x)$ is the nearest point of $x_0$ in $E(S_l)$ by (C0),
and hence $\B(E(x),\beta/2)\subset R_{l,i}$ is a constant region of $D(y)$ by (A2).
By (C1), $\B(E(x),\epsilon_2) \subset \B(E(x),\beta/2)\subset R_{l,i}$ and hence $\B(E(x),\epsilon_2)\subset R_{l,i}^{\beta/4}$.
By (B1), $\E(x)\in \B(E(x),\epsilon_2)$ since $x\in C_{l,i}/S_{-1}$, hence  $\E(x)\in R_{l,i}^{\beta/4}$.
By (B2), $\E(x)\not\in S_{-2}$ and hence  $||\D(\E(x))-D(\E(x))||<\epsilon_2$.
The first inequality in \eqref{eq-T1} is proved.

We now prove the second inequality in \eqref{eq-T1}.
We already proved  $\E(x)\in \B(E(x),\epsilon_2)\subset R_{l,i}$
and hence  $D(\E(x))=D(E(x))$ by (A3).
By (A1), $||D(\E(x))-x|| = ||D(E(x))-x|| < \gamma$
since $x\in C_{l,i}\subset V_{l,1}$.
The last inequality in \eqref{eq-T1} follows from (C3).

By \eqref{eq-T1}, $V_{l,1}/S_{-1}\subset \widehat{S}_l$.
Finally, by (B1), (A1),  and (C3),
we have,
$\frac{V(\widehat{S}_{l})}{V(S_{l})} \ge
\frac{V(V_{l,1}/S_{-1})}{V(S_{l})}>
\frac{V(V_{l,1}) -\epsilon_3 }{V(S_{l})}
>(1-\epsilon_1)-\frac{\epsilon_3}{V(S_{l})}>1-\epsilon.$
The theorem is proved.

In the above proof, each $S_l$ is assumed to be connected.
It is clear that the proof can be easily modified for the case such that different $S_l$ have the same label.
\end{proof}

\section*{Appendix B. Structure of the network }

We give the structure of the networks  used in the experiments.
We first give the structure of the CAE.

{\bf The structure of $\E$:}

Input layer: $N\times 1\times  28\times  28$,  where $N$ is steps of training.

Hidden layer 1: a convolution layer with kernel $1\times 10\times 3\times 3$ with padding$=1$ $\to$ do a batch normalization $\to$ do Relu $\to$ use max pooling with step=2.

Hidden layer 2: a convolution layer  with kernel $10\times 28\times 3\times 3$ with padding$=1$ $\to$ do a batch normalization $\to$ do Relu $\to$ use max pooling with step=2.

Hidden layer 3: a convolution layer  with kernel $28\times 28\times 3\times 3$ with padding$=1$ $\to$ do a batch normalization $\to$ do Relu $\to$ use max pooling with step=2.

Hidden layer 4: draw the output as $N\times 252$ $\to$ use a full connection with output size $N\times 168$ $\to$ do Relu.

Output layer: a full connection layer with output size $N\times 100$ $\to$ do Relu.

\vskip10pt
{\bf The structure of $\D$:}

Input layer: $N\times 100$

Hidden layer 1: a full connection layer with output size $N\times 252$ $\to$ do Relu.

Hidden layer 2: a full connection layer with output size $N\times 600$ $\to$ do Relu.

Hidden layer 3: a full connection layer with output size $N\times 2700$ $\to$ do Relu.

Hidden layer 4: draw the output as $N\times 3\times 30\times 30$ $\to$ use convolution with kernel $3\times 28\times 3\times 3$ with padding$=0$ $\to$ do a batch normalization $\to$ do Relu.

Output layer: use convolution with kernel $28\times 1\times 1\times 1$ with padding$=0$ $\to$ do Relu.

\vskip10pt
In section \ref{sec-out}, a CNN $\H$  is used to recognize outliers.
In section \ref{sec-ad}, a CNN trained with adversarial training is used for comparison.
These two CNNs have the following  structure.

{\bf Input layer}: $N\times 1\times 28\times 28$.

Hidden layer 1: a convolution layer  with kernel $1\times 32\times 3\times 3$ with padding$=1$  $\to$ do Relu $\to$ use max pooling with step=2.

Hidden layer 2: a convolution layer  with kernel $32\times 64\times 3\times 3$ with padding$=1$  $\to$ do Relu $\to$ use max pooling with step=2.

Hidden layer 3: a convolution layer  with kernel $64\times 64\times 3\times 3$ with padding$=1$ $\to$ do Relu $\to$ use max pooling with step=2.

Hidden layer 4: draw the output as $N\times 576$ $\to$ use a full connection layer with output size $N\times 128$ $\to$ do Relu.

{\bf Output layer}: a full connection layer with output size $N\times 10$.
The output layer of $\H$ is a full connection layer with output size $N\times 11$

\end{document}